\pgfplotsset{compat=1.17}
\newtheorem{theorem}{Theorem}[section]
\newtheorem{definition}{Definition}[section]
\newtheorem{example}{Example}[section]
\begin{document}
\begin{sloppypar}

\title{GCAO: Group-driven Clustering via Gravitational Attraction and Optimization}

\author{Qi~Li\thanks{Qi Li is with the School of Information Science and Technology, 
Beijing Forestry University, Beijing 100083, China 
(e-mail: liqi2024@bjfu.edu.cn). 
This work was supported by the Fundamental Research Funds for the Central Universities 
(No. XJJSKYQD202536).}
and
Jun~Wang\textsuperscript{*}\thanks{Jun Wang is with the College of Forestry, 
Beijing Forestry University, Beijing 100083, China 
(e-mail: wangjun1704@bjfu.edu.cn). 
(*Corresponding author.)}}

\markboth{}%
{Li \MakeLowercase{\textit{et al.}}: GCAO: Group-driven Clustering via Gravitational Attraction and Optimization}

\IEEEpubid{}

\maketitle

\begin{abstract}
Traditional clustering algorithms often struggle with high-dimensional and non-uniformly distributed data, where low-density boundary samples are easily disturbed by neighboring clusters, leading to unstable and distorted clustering results. To address this issue, we propose a \textbf{Group-driven Clustering via Gravitational Attraction and Optimization (GCAO)} algorithm. GCAO introduces a group-level optimization mechanism that aggregates low-density boundary points into collaboratively moving groups, replacing the traditional point-based contraction process. By combining local density estimation with neighborhood topology, GCAO constructs effective gravitational interactions between groups and their surroundings, enhancing boundary clarity and structural consistency. Using groups as basic motion units, a gravitational contraction strategy ensures globally stable and directionally consistent convergence. Experiments on multiple high-dimensional datasets demonstrate that GCAO outperforms 11 representative clustering methods, achieving average improvements of \textbf{37.13\%}, \textbf{52.08\%}, \textbf{44.98\%}, and \textbf{38.81\%} in NMI, ARI, Homogeneity, and ACC, respectively, while maintaining competitive efficiency and scalability. These results highlight GCAO's superiority in preserving cluster integrity, enhancing boundary separability, and ensuring robust performance on complex data distributions.
\end{abstract}

\begin{IEEEkeywords}
Clustering algorithms, Gravitational contraction, Group movement, High-dimensional data
\end{IEEEkeywords}

\section{INTRODUCTION}

\IEEEPARstart{C}{lustering} is a core unsupervised learning technique, aiming to partition a dataset into several meaningful clusters based solely on the intrinsic similarity among samples, such that samples within a cluster are highly similar while samples across clusters differ significantly \cite{jain2010data, xu2015comprehensive}. As a cornerstone in fields such as data mining, pattern recognition, image processing, and bioinformatics, clustering analysis plays a crucial role in revealing the latent structures and high-dimensional distribution patterns of data.

\textbf{Research Background and Motivation.}
In recent years, with the continuous enhancement of data acquisition capabilities, modern datasets exhibit characteristics such as large scale, high dimensionality, complex structures, and mixed densities. Traditional clustering algorithms, such as K-Means \cite{KMeans2020} and DBSCAN \cite{DBSCAN2017}, although performing well in specific tasks, often rely on strict parameter assumptions (e.g., $K$ value or $\epsilon$ radius), making it difficult to maintain stable clustering performance in complex distributions.

To address these limitations, researchers have proposed clustering methods based on the concepts of \textbf{Density Contraction} and \textbf{Potential Minimization}, such as Mean-Shift \cite{MeanShift2002} and Density Peaks Clustering (DPC) \cite{DPC2014}. The former iteratively updates the positions of samples within a density gradient field, gradually converging to mode points in high-density regions; the latter identifies density peaks as cluster centers in a static manner by computing the local density of each sample and its minimum distance to higher-density points. Although these methods perform well in capturing non-convex structures and automatically determining cluster centers, they primarily focus on individual-level density distributions and distance relationships, lacking modeling of neighborhood interactions and overall potential evolution. Therefore, they still exhibit limitations when handling non-uniform densities or complex geometric structures.

\begin{figure}[htbp]
    \centering    \includegraphics[width=0.15\textwidth]{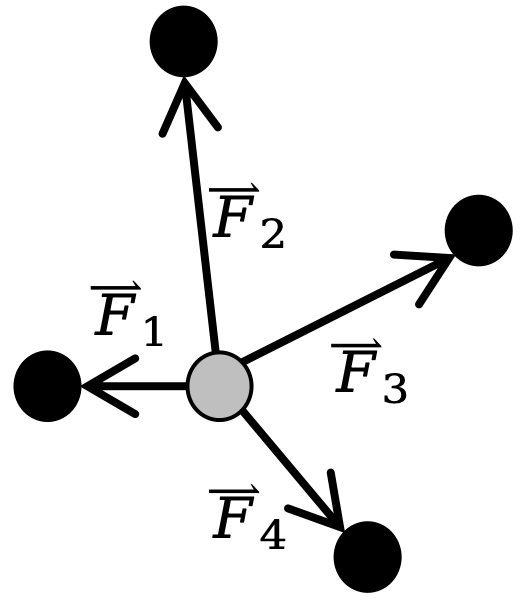}
    \caption{Visualization of Gravitational Mechanism}
    \label{fig:ils_gt}
\end{figure}

\textbf{Introduction and Development of Gravitational Mechanism.}
In recent years, the concept of \textbf{Gravitational Mechanism}-based clustering has gradually emerged, providing a more physically interpretable framework for contraction-based algorithms. As shown in Fig.~\ref{fig:ils_gt}, the basic idea originates from Newtonian Gravitation, where the "attractive force" between data points can drive samples to aggregate along the potential direction, forming stable cluster structures at gravitational equilibrium points. Representative studies include the \textbf{Newtonian Clustering} \cite{Newtonian2007} model, which defines a gravitational function based on data point mass and distance to achieve continuous potential descent and convergence; the \textbf{Herd Clustering} \cite{Herd2014} model, which introduces a group behavior mechanism, allowing samples to self-organize into clusters via "group movement"; and the \textbf{SBCA (Swarm-based Clustering Algorithm)} \cite{SBCA2005}, which further combines swarm intelligence with gravitational attraction, enhancing convergence speed and noise robustness.

Building on this foundation, a series of improved algorithms based on \textbf{Gravitational Contraction} have been proposed. For example, \textbf{HIBOG} \cite{HIBOG2021} introduces a quasi-gravitational mechanism between each object and its $k$ nearest neighbors, attracting objects toward neighbors to achieve gradual cluster aggregation; \textbf{HIAC}\cite{HIAC2023} further refines the adjacency strategy of HIBOG by distinguishing effective neighbors from ineffective ones, retaining gravitational effects only among effective neighbors, thereby enhancing boundary recognition and overall clustering accuracy; \textbf{HIACSP}\cite{HIACSP2025} introduces a shortest-path-based distance metric, encouraging the $k$ nearest neighbors to concentrate toward the same cluster core, retaining attraction only among intra-cluster objects without requiring an attraction threshold, effectively avoiding micro-clusters and boundary point shifts. The common feature of these algorithms is the transformation of "individual movement" into "neighborhood collaborative contraction" through the gravitational mechanism, reflecting physical consistency and global adaptiveness in the clustering process.

\textbf{Current Research Status and Challenges.}
Although gravitational mechanism-based clustering methods have achieved significant progress in maintaining structural continuity and noise robustness, existing algorithms are mostly limited to modeling interactions at the "point-to-point" or "local neighborhood" level, lacking a systematic characterization of \textbf{Group-level Cooperation}. The main issues include: (1) \textbf{Structural Disruptiveness}: isolated points in boundary or sparse regions are easily misclustered due to local gravitational interference, leading to cluster splitting and blurred boundaries; (2) \textbf{Parameter Sensitivity}: parameters such as attraction strength or neighborhood radius significantly affect convergence results, with insufficient adaptive mechanisms for different datasets; (3) \textbf{Convergence Instability}: single-point interactions driven by gravity may fall into local equilibrium, causing cluster center drift or oscillation.

\textbf{Proposed Algorithm.}
To overcome the above limitations, this paper proposes a \textbf{Group-driven Clustering via Gravitational Attraction and Optimization (GCAO)} algorithm, achieving a transition from "individual contraction" to "group collaborative contraction." The core ideas include: (1) constructing local collaboratively moving groups based on \textbf{truncated radius} and \textbf{local density}, and computing the inter-group interaction force using a step coefficient $\lambda$ to iteratively update groups, enabling low-density points in boundary regions to move collaboratively under gravitational influence, thereby preserving the integrity and continuity of cluster structures; (2) designing a \textbf{group gravitational driving mechanism} to dynamically optimize the contraction process during gravitational interactions, ensuring the stability and consistency of clustering results.

This "group-level gravitational contraction" mechanism effectively combines the advantages of density guidance and physical attraction, mathematically enhancing the convergence stability of the algorithm and structurally strengthening the coherence of cluster boundaries, thereby demonstrating superior clustering performance on complex distribution data.

\textbf{Objectives and Challenges.}
This work aims to design a highly robust and interpretable gravitational contraction optimization model that can achieve stable and accurate clustering in multi-density, nonlinearly distributed data environments. Achieving this goal faces two key challenges:
\begin{itemize}
    \item \textbf{Challenge 1: Adaptive neighborhood range and rational group construction.} Dynamically defining the gravitational neighborhood based on local density variations and identifying effective collaboratively moving groups are prerequisites for maintaining structural integrity;
    \item \textbf{Challenge 2: Stability and global convergence of group gravitational contraction.} Balancing the coupling relationship among attraction strength, step coefficient $\lambda$, and the number of iterations to achieve a dynamic optimum between global convergence and boundary equilibrium is the core difficulty in designing the gravitational mechanism.
\end{itemize}

\textbf{Contributions.}  
The main contributions of this paper can be summarized as follows:

\begin{enumerate}
    \item We propose a GCAO algorithm, which aggregates low-density points at cluster boundaries into collaboratively moving groups, replacing the traditional point-level contraction mechanism. This "group-level optimization" significantly enhances the continuity and structural preservation of cluster boundaries, effectively suppressing random deviations in low-density regions and inter-cluster interference. Compared with traditional single-point gravitational models, this mechanism demonstrates higher stability and robustness under complex distributions and high-noise scenarios.

    \item We construct \textbf{collaboratively moving groups}, dynamically adjusting the interaction range based on local density estimation and neighborhood topology information to achieve adaptive group formation. This strategy retains only the effective gravitational interactions between the group and its exterior, avoiding cluster shifts and boundary misclassification, significantly reducing dependence on fixed parameters and improving the preservation of cluster structure information.

    \item We design a \textbf{group gravitational optimization mechanism}, performing collaborative contraction with groups as the basic motion units, and theoretically proving the consistency of contraction directions and global convergence. Extensive experimental results show that this mechanism significantly outperforms existing methods on multiple high-dimensional real-world datasets: compared with 11 representative algorithms, GCAO achieves average improvements of approximately \textbf{37.13\%}, \textbf{52.08\%}, \textbf{44.98\%}, and \textbf{38.81\%} in NMI, ARI, Homogeneity, and ACC, respectively, while maintaining high computational efficiency and good scalability.
\end{enumerate}

\textbf{Organization of the Paper.}
Section II presents related work; Section III introduces relevant definitions and research background; Section IV details the GCAO algorithm, including density estimation, group collaboration, and gravitational response mechanisms; Section V covers experimental design, comparative experiments, ablation studies, and hyperparameter discussions; Section VI concludes the paper and outlines future research directions.

\section{Related Work}
Existing clustering algorithms are typically designed based on distance, density, or graph-theoretic features. Representative methods include partition-based clustering (e.g., K-Means), density-based clustering, hierarchical clustering, and graph-based methods, each suitable for different data distributions and task scenarios.

DPC is one of the widely applied density-based clustering methods in recent years. Its core idea assumes that cluster centers generally have high local density and are far from other high-density points. DPC computes the local density of each sample and its minimum distance to higher-density points to automatically select cluster centers and partition samples. This method can identify clusters of arbitrary shapes without predefining the number of clusters and performs well on non-convex structured data. However, DPC is sensitive to parameters such as truncation distance and is easily affected by outliers in high-dimensional or noisy environments, leading to cluster center drift and decreased clustering consistency.

Unlike density peak methods, gravity-based clustering draws inspiration from Newtonian gravitation in physics, simulating attractive and repulsive forces among data points to move samples along gravitational potential gradients and eventually form clusters at force equilibrium. Classical methods such as \textbf{Newtonian Clustering} \cite{Newtonian2007} define a gravitational function based on data point mass and distance, achieving continuous potential descent and convergence; \textbf{Herd Clustering} \cite{Herd2014} introduces a group collaboration mechanism, allowing samples to self-organize into clusters through collective movement; \textbf{SBCA} \cite{SBCA2005} further combines swarm intelligence with gravitational attraction to enhance convergence speed and noise robustness. Although these methods exhibit good physical interpretability and adaptiveness, they still face challenges in high-dimensional spaces, including the curse of dimensionality, high computational complexity, and noise sensitivity.

To overcome the limitations of traditional gravitational clustering, recent studies have attempted to combine swarm intelligence optimization with gravitational mechanisms, introducing global search and local adaptive strategies to enhance clustering stability and accuracy. Among these, improved algorithms based on gravitational contraction, such as \textbf{HIBOG} \cite{HIBOG2021}, \textbf{HIAC} \cite{HIAC2023}, and \textbf{HIACSP} \cite{HIACSP2025}, incorporate neighborhood constraints, shortest-path distances, and group collaboration mechanisms to enhance boundary recognition and mitigate micro-cluster shift issues, significantly improving clustering accuracy and convergence efficiency. However, these methods generally suffer from high computational cost, sensitivity to hyperparameters, and limited model interpretability, restricting their applicability to high-dimensional complex datasets.

In summary, existing methods face three core challenges: first, single-point gravitational interactions in boundary regions are easily affected by noise, causing excessive inter-cluster forces that disrupt cluster structures; second, most methods use single points as the basic unit, ignoring collaborative movement among neighbors, resulting in unstable migration directions for boundary points; third, in high-dimensional complex data, computational demands and hyperparameter sensitivity limit practical applicability. To address these issues, this paper proposes \textbf{GCAO}, which introduces a group collaborative response mechanism within the traditional gravitational clustering framework. Low-density boundary points and their neighboring points are formed into collaborative groups for unified gravitational contraction. This strategy not only preserves local structural consistency and effectively suppresses excessive forces on inter-cluster boundary points, making cluster boundaries clearer, but also enhances global clustering stability, robustness, and adaptability to high-dimensional data, achieving superior clustering performance and model interpretability in complex data distributions.

\section{Preliminary}

\subsection{Problem Definition}
Given a clustering dataset $\mathcal{X} = \{\mathbf{x}_1, \mathbf{x}_2, \dots, \mathbf{x}_n\}$, where each data point $\mathbf{x}_i \in \mathbb{R}^d$ represents the $i$-th sample in a $d$-dimensional feature space, this work aims to reshape the data distribution structure by introducing a group-level gravitational contraction mechanism, using groups as the basic units. Samples within the same cluster are pulled toward the cluster center in feature space, thereby reducing the interference of noise and outliers, enhancing intra-cluster compactness, and expanding inter-cluster separation. The ultimate goal is to improve the separability and robustness for subsequent clustering tasks.

Specifically, the research problem can be defined as:
\begin{itemize}
    \item \textbf{Input:} The original dataset $\mathcal{X}$ and its cluster partition $C = \{\mathbf{C}_1, \mathbf{C}_2, \dots, \mathbf{C}_k\}$, where $\mathbf{C}_p$ denotes the data subset of the $p$-th cluster.
    \item \textbf{Output:} The dataset after gravitational contraction optimization $C' = \{\mathbf{C}'_1, \mathbf{C}'_2, \dots, \mathbf{C}'_k\}$, where each cluster exhibits enhanced internal compactness and significantly increased inter-cluster separation, providing improved input for subsequent clustering algorithms.
\end{itemize}

\subsection{Research Background}

In physics, gravity is the fundamental force describing the mutual attraction between two objects with mass, classically expressed by Newton's law of universal gravitation:
\begin{equation}
F = G \cdot \frac{m_1 m_2}{r^2},
\end{equation}
where $m_1$ and $m_2$ denote the masses of the two objects, $r$ is the distance between them, and $G$ is the gravitational constant. This formula indicates that the gravitational force is proportional to mass and inversely proportional to the square of the distance.

Inspired by this principle, researchers analogize data points as "massive" particles, using gravitational models to simulate interactions among data, achieving self-organization and aggregation during clustering. For example, algorithms such as DPCG~\cite{HIACSP2025}, HIBOG~\cite{HIBOG2021}, and HIAC~\cite{HIAC2023} achieve approximately \textbf{10\%–30\%} improvement in clustering accuracy compared to traditional DPC across multiple datasets, demonstrating the effectiveness of gravity-based clustering in enhancing intra-cluster compactness and inter-cluster separability.

However, traditional single-point gravitational mechanisms still face limitations when handling non-uniformly distributed or complex-shaped data: samples in low-density regions are easily over-attracted by points from other clusters, resulting in blurred cluster boundaries and disrupted local structures, thereby affecting overall clustering performance. To address this, this paper proposes a \textbf{Group-wise Gravitational Contraction (GCAO)} mechanism, introducing group-level collaborative gravitational effects. While preserving low-density structural information within original clusters, this approach effectively suppresses abnormal attraction effects, improving boundary clarity and structural stability in clustering. Extensive experimental results indicate that GCAO significantly outperforms existing gravitational clustering algorithms in both accuracy and robustness, validating the effectiveness and advancement of the group-wise gravitational contraction concept.

\section{PROPOSED METHOD: GCAO}

\begin{figure*}[htbp]
    \centering
    \includegraphics[width=0.7\textwidth]{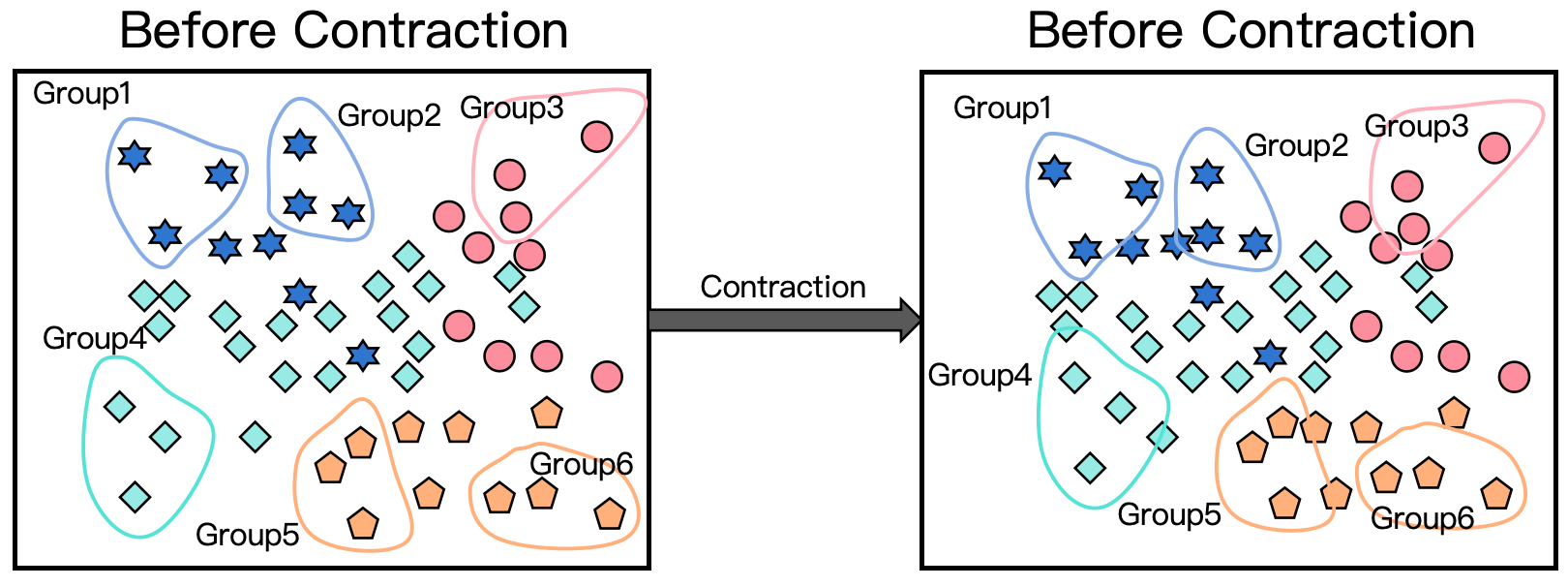}
    \caption{Overview of the proposed GCAO algorithm. }
    \label{fig:contraction}
\end{figure*}

\subsection{Overview}
This paper proposes a collaborative clustering data optimization algorithm (GCAO) that integrates local density estimation with gravitational mechanisms. Fig. \ref{fig:contraction} illustrates the overall workflow and effect of the algorithm:  
In the initial stage, the algorithm forms several collaboratively moving groups using local density estimation and neighborhood topology relationships (GFP—Group Formation Process), shown as Groups 1–6 on the left side of the figure. Subsequently, based on the Gravitational Optimization Process (GOP), the member points of each group move collaboratively toward the cluster center, contracting sample distances and optimizing cluster structures, as shown on the right side of the figure. Compared with traditional single-point gravitational clustering methods, this algorithm effectively reduces the risk of intra-cluster structure splitting while enhancing the clarity of inter-cluster boundaries, making it particularly suitable for high-dimensional, non-uniformly distributed, or complex-structured clustering data.

\subsection{GFP}

\textbf{Motivation.}  
Traditional gravitational mechanism-based clustering optimization methods typically rely on a \emph{single-point gravitational model} to update data point positions, aiming to enhance intra-cluster compactness and inter-cluster separation. However, when facing \emph{non-uniformly distributed} or complex-boundary data, this mechanism exhibits significant limitations. The key issue lies in the boundary regions between clusters: in these areas, data points are often sparsely distributed, and the attraction from intra-cluster points to boundary points is insufficient to form stable constraints. As a result, when there are nearby points from other clusters within the boundary point's neighborhood, the resultant force direction can be dominated by these inter-cluster points, as illustrated in Fig.~\ref{fig:sg_f}, where $\vec{F_1}$, $\vec{F_3}$, and $\vec{F_4}$ originate from intra-cluster points, while $\vec{F_2}$ comes from an inter-cluster point. In other words, without sufficient intra-cluster constraints, boundary points are easily pulled away by inter-cluster forces, disrupting local structures and gradually blurring inter-cluster boundaries.

\begin{figure}[htbp]
    \centering    \includegraphics[width=0.25\textwidth]{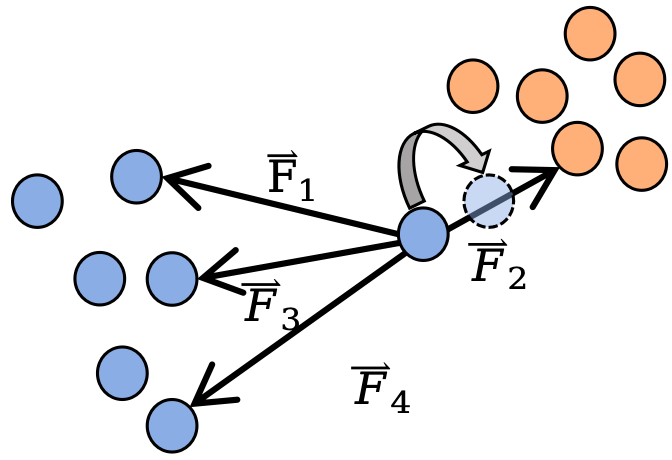}
    \caption{An example of inter-cluster forces affecting a low-density boundary point}
    \label{fig:sg_f}
\end{figure}

To address the above issues, this paper proposes a \textbf{Collaboratively Moving Group Mechanism} based on \emph{local density estimation}. In data distributions, high-density points are typically located in the cluster core regions, where they are less affected by inter-cluster attraction and thus exhibit more stable structural characteristics. In contrast, low-density points are often distributed at cluster boundaries or sparse regions, where they are easily attracted by other clusters, leading to blurred boundaries and structural distortion. To tackle this phenomenon, low-density points are adaptively identified via \textbf{local density estimation}, and several \textbf{collaboratively moving groups} are constructed around them. Each group consists of low-density points and several of their neighboring data points. Through collaborative and antagonistic interactions among groups, boundary points are guided to maintain dynamic equilibrium within the overall gravitational field. This approach effectively strengthens clustering stability and boundary separability in low-density regions, prevents boundary points from being excessively attracted by inter-cluster points, and simultaneously reduces computational complexity, enhancing both algorithm stability and global robustness.

Data points belonging to the same group are referred to as member points. Unlike single-point force updates, the movement of a group is determined collectively by the resultant force of all its member points. This design has two advantages: first, the interference from a single inter-cluster point is significantly weakened in the group resultant force, preventing local inter-cluster attraction from dominating low-density points; second, the resultant force direction more accurately reflects the overall structural trend within the cluster, enabling the group to maintain structural consistency during movement. Through this group-based collective motion, the GOP mechanism effectively mitigates the risk of single points being excessively attracted by other clusters, thereby preserving the local structural information of the original cluster during group contraction and significantly enhancing the clarity and stability of inter-cluster boundaries.

In the following section, we provide a detailed description of how collaboratively moving groups are constructed.

\textbf{Local Density Estimation.}  
For a dataset \( \mathcal{X} = \{x_1, x_2, \dots, x_N\} \), we first compute the \emph{local density} \(\rho_i\) for each data point based on a truncation radius \( r \), and then obtain the \emph{density lower bound} $\rho^{\dagger}$ via the global average. Inspired by the DPC algorithm \cite{DPC2014}, our experiments indicate that selecting the distance to the nearest 1.5\% of neighbors for each point as the truncation radius can ensure accurate density estimation while effectively suppressing the influence of local outliers, yielding stable and high-precision clustering results across different datasets. Therefore, this empirical value is adopted as the basis for truncation radius computation in this study. Relevant definitions are as follows:

\begin{definition}[Truncation Radius]  
For a dataset $\mathcal{X}$, the truncation radius $r$ is defined as:
\[
r = \frac{1}{N} \sum_{i=1}^{N} r_i,
\]
where \( r_i \) denotes the distance from point $x_i$ to its nearest 1.5\% neighbor, and $N$ is the total number of data points in the dataset.
\end{definition}

\begin{definition}[Local Density]\label{def:pt_ds}  
For any data point \(x_i \in \mathcal{X}\), its local density $\rho_i$ is defined as:
\[
\rho_i = \left| \{ x_l \in \mathcal{X} - \{x_i\} \mid \|x_l - x_i\|_2 \leq r \} \right|,
\]
i.e., the number of neighboring points within radius \( r \) (excluding itself).
\end{definition}

Fig. \ref{fig:r_i} illustrates the visualization of the truncation radius and local neighborhood. As shown, when using $r_i$ as the radius, there are four neighbors within this range (marked as red points in the figure), resulting in $\rho_i=4$.

\begin{figure}[hbpt]
    \includegraphics[width=0.5\textwidth]{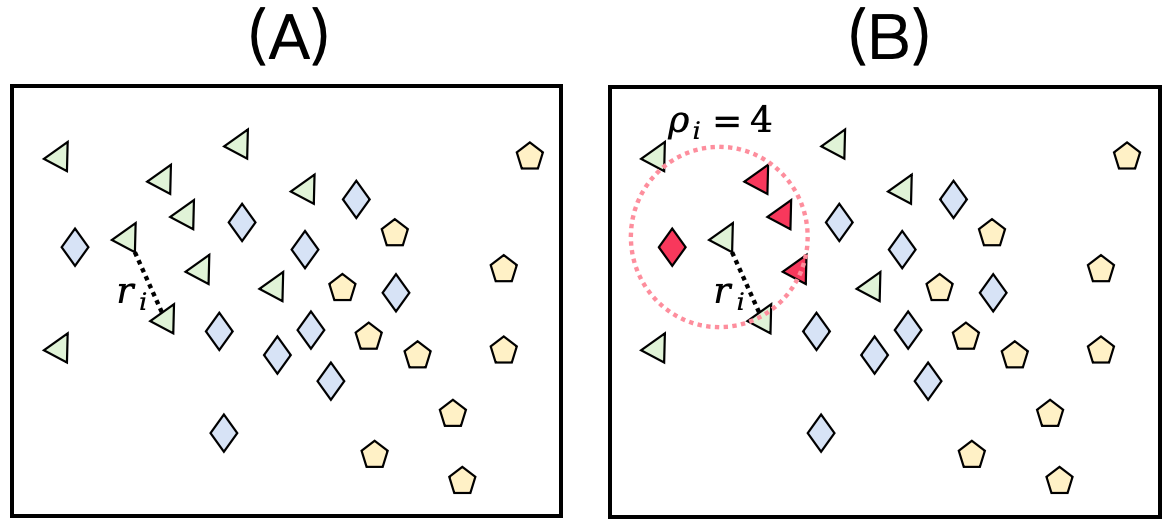}
    \caption{An example of local density.}
    \label{fig:r_i}
\end{figure}

\begin{definition}[Density Lower Bound]\label{def:pt_rg}  
For a dataset $\mathcal{X}$, the density lower bound $\rho^{\dagger}$ is defined as:
\[
\rho^{\dagger} = \frac{1}{N} \sum_{i=1}^{N} \rho_i.
\]
\end{definition}

Using the local density $\rho_i$ defined in Definition~\ref{def:pt_ds}, we can accurately characterize the local structural features of each data point; based on the computed $\rho^{\dagger}$ from Definition~\ref{def:pt_rg}, data points can be distinguished as high-density or low-density points.

\textbf{Collaboratively Moving Group Mechanism.}  
As mentioned earlier, low-density points are often susceptible to strong attraction from nearby inter-cluster points, which can disrupt cluster structures and blur boundaries. Therefore, the collaboratively moving groups in this work are specifically constructed around low-density points. Member points are treated as the basic units for gravitational interactions, with low-density points and their neighboring points collectively forming a \textbf{collaboratively moving group} that serves as the fundamental movement unit. We first define the set of low-density points (Definition~\ref{def:GOP_st}).

\begin{definition}[Low-Density Point Set]\label{def:GOP_st}  
For a dataset \(\mathcal{X}\), the low-density point set $\mathcal{L}$ is defined as:
\[
\mathcal{L} = \{x_j \mid \rho_j < \rho^{\dagger} \ \text{and} \ \rho_j > 0\}.
\]
Here, \(x_j\) represents a data point whose local density is below the density threshold $\rho^{\dagger}$.
\end{definition}

Building on this, we further define collaboratively moving groups. In our framework, each collaboratively moving group serves as a basic movement unit for gravitational contraction, and the same neighboring point may belong to multiple groups simultaneously. To ensure reasonable group assignment and contraction, a member point is assigned to group $G_j$ if the majority of its $k$ nearest neighbors belong to $G_j$ rather than $G_p$, and participates as a member of $G_j$ in subsequent collaborative movement. Example~\ref{exp:shared_pts} details the assignment process for member points that belong to multiple groups.

\begin{example}
Fig. \ref{fig:shared_points} illustrates the specific decision process for member point assignment. Suppose the dataset contains 30 points, and in the initial group formation result, Group $j$ and Group $p$ share the same member point, as shown in Fig.~\ref{fig:shared_points}(A). To determine the final assignment of this point, we take it as the center and search for its $k$ nearest neighbors (here $k = 5$), as shown in Fig.~\ref{fig:shared_points}(B), and count how many of these neighbors belong to Group $j$ and Group $p$, respectively. As seen in Fig.~\ref{fig:shared_points}(B), the majority of neighbors belong to Group $p$, so the shared point is ultimately assigned to Group $p$, as shown in Fig.~\ref{fig:shared_points}(C).
\label{exp:shared_pts}
\end{example}

\begin{figure}[htbp]
    \includegraphics[width=0.49\textwidth]{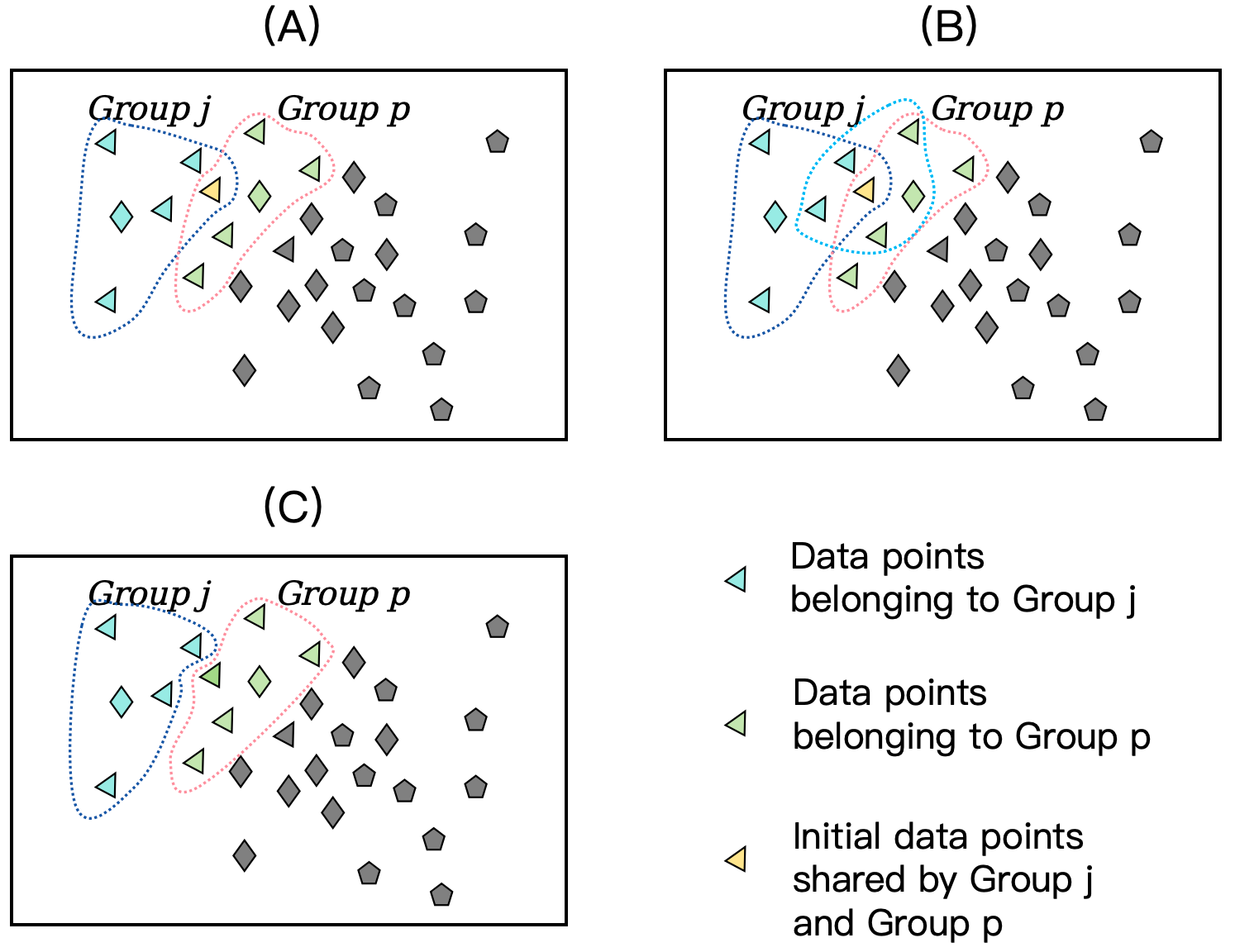}
    \caption{An example of processing shared points.}
    \label{fig:shared_points}
\end{figure}

Meanwhile, the group assignment is performed only once during the initialization stage to ensure that the groups maintain their initial structure during subsequent contraction movements, aligning with the algorithm's property of preserving the local structure of the original clusters. The collaboratively moving group is defined as follows:

\begin{definition}[Collaboratively Moving Group]\label{def:tg_gp}
Let the set of low-density points be $\mathcal{L} = \{x_1, x_2, \dots, x_m\}$. For any low-density point $x_j \in \mathcal{L}$, its collaboratively moving group $G_j$ is defined as:
\[
G_j = \{x_j\} \cup \{y_i \mid y_i \in \mathcal{N}_k(x_j),\ 
n_j(y_i) > n_p(y_i),\ \forall p \neq j \},
\]
where $\mathcal{N}_k(x_j)$ denotes the set of $k$ nearest neighbors of $x_j$, $n_j(y_i)$ is the number of $y_i$'s $k$ neighbors that belong to group $G_j$, and $n_p(y_i)$ is the number of $y_i$'s $k$ neighbors that belong to group $G_p$.
\end{definition}

In summary, the GFP establishes collaboratively moving groups centered on low-density points, preserving local structural information in cluster boundary regions and laying a solid foundation for the subsequent GOP.

\subsection{GOP}

\textbf{Main Idea.}  
In traditional gravitational models, position updates are typically based on \emph{point-to-point} resultant force calculations; however, the collaboratively moving mechanism proposed in this study treats \emph{groups as the basic movement units}, computing the group resultant force by integrating the forces acting on all member points and updating the group's overall position accordingly. Each member point participates as a basic unit transmitting forces, while the group acts as the executor of contraction and migration, achieving a transition from ``individual-driven'' to ``group-coordinated'' motion. Consequently, the traditional single-point force update framework is no longer directly applicable. To address this challenge, we innovatively design a \textbf{Group Gravitational Response Mechanism}, introducing member forces, group forces, and group contraction vectors to regulate inter-group interactions, thereby achieving a more stable, efficient, and interpretable gravitational contraction process.

\textbf{Group Gravitational Response Mechanism.}  
The Group Gravitational Response Mechanism is implemented through the following definition to update the positions of collaboratively moving groups.

\begin{definition}[Gravitational Response Mechanism]
For any two data points \( (x_m, x_n) \in \mathcal{X} \), the gravitational response vector \( \vec{F}_{mn} \) is defined as:
\[
\vec{F}_{mn} = \lambda \cdot \frac{d_{mz}}{d_{mn}}, \quad d_{mn} = \|x_m - x_n\|_2,
\]
where:
\begin{itemize}
    \item $\lambda$ is the step size coefficient;
    \item \(x_m, x_n\) are any two points in the data space;
    \item \(d_{mn}\) is the Euclidean distance between points \(x_m\) and \(x_n\);
    \item \(d_{mz}\) is the Euclidean distance between \(x_m\) and its nearest neighbor \(x_z\);
    \item \( \vec{F}_{mn} \) is the gravitational response vector exerted by \(x_n\) on \(x_m\), pointing from \(x_n\) to \(x_m\).
\end{itemize}
\end{definition}

The gravitational response vector \( \vec{F}_{mn} \) between data points is determined by the Euclidean distance \( d_{mn} \) and the step size coefficient \( \lambda \). This calculation accurately captures the spatial relationship between low-density points and their neighbors: the closer the points, the stronger the force; conversely, the farther apart, the weaker the force, enabling fine-grained control over the influence on local structures.

In this study, two types of forces are defined: the first is the \textbf{Member Force} shown in Definition~\ref{def:mb_f}, and the second is the \textbf{Group Force} shown in Definition~\ref{def:gp_fc}.

\begin{definition}[Member Force]
    For any member point \( x_i \in G_j \) in a collaboratively moving group \( G_j \), we first select its \( k \) nearest neighbors and exclude points inside the group, retaining only the external neighborhood set:
    \[
    \mathcal{N}_k^{\text{out}}(x_i) = \{ x_n \mid x_n \in \mathcal{N}_k(x_i),\ x_n \notin G_j \}.
    \]
    The member force on this point is then defined as:
    \[
    \vec{F}_i = \sum_{x_n \in \mathcal{N}_k^{\text{out}}(x_i)} \vec{F}_{ni}.
    \]
    Here, \(\vec{F}_i\) denotes the total gravitational force exerted on the member point \(x_i\).
    \label{def:mb_f}
\end{definition}

\begin{figure}[htbp]
    \centering
    \includegraphics[width=0.27\textwidth]{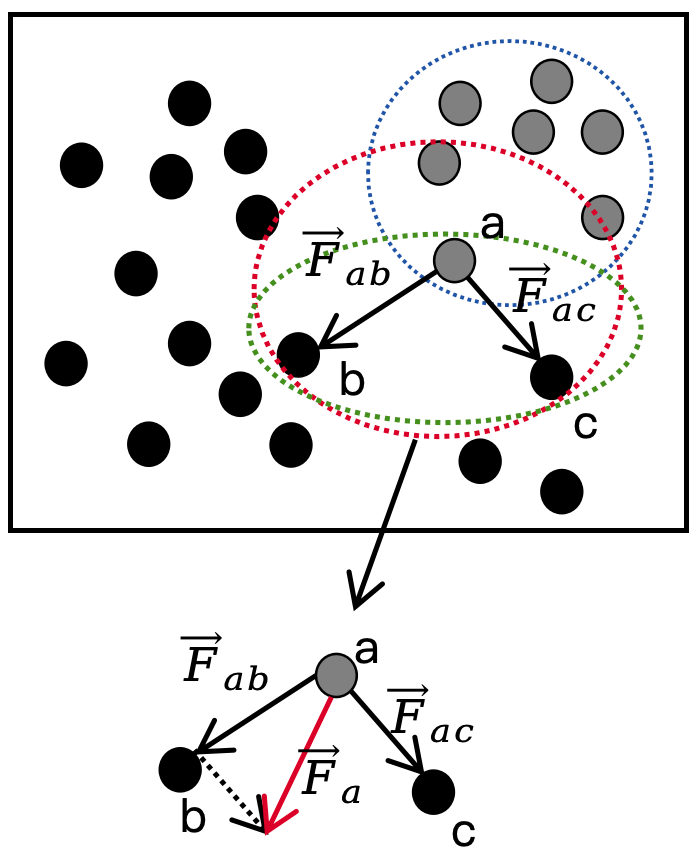}
    \caption{Illustration of the gravitational forces acting on a member point}
    \label{fig:force_fi}
\end{figure}

The design of the member force effectively captures the combined influence of external points on the member points within the group while avoiding redundant interactions among internal members. As illustrated in Fig.~\ref{fig:force_fi}, the diagram visually demonstrates the external forces acting on a single member point. Points inside the blue circle represent members of the same group, points inside the red circle represent neighboring points with potential gravitational influence on the current member, and points inside the green circle denote the actual external points exerting effective force on the member. This design ensures that the forces acting on group members originate solely from external points, thereby avoiding redundant internal interactions, maintaining group structure stability, and improving overall force computation efficiency.

\begin{definition}[Group Force]\label{def:gp_fc}  
    For each collaboratively moving group \(G_j\), the group force is defined as:
    \[
    \vec{F}_{G_j} = \frac{1}{|G_j|} \sum_{x_i \in G_j} \vec{F}_{i},
    \]
    where \( \vec{F}_{i} \) is the member force of point \(x_i\), and \(|G_j|\) denotes the number of members in group \(G_j\).  
\end{definition}

Based on the computed group force, the group contraction vector \( \Delta x_j \) is defined as follows:

\begin{definition}[Group Contraction Vector]\label{def:gt_vc}  
For group \(G_j\), let \(x_j \in G_j\) be a data point:
\[
\Delta x_j = \vec{F}_{G_j}.
\] 
\end{definition}

The computed group contraction vector \( \Delta x_j \) is applied to all member points in the group \(G_j\). Specifically, after each iteration, the position of member point \(x_j\) at iteration \(t+1\) is updated as follows:

\begin{definition}[Group Position Update]\label{def:gt_pos_update}  
For group \(G_j\), let \(x_j \in G_j\), the position of \(x_j\) at the \(t+1\)-th contraction iteration is:
\[
x_{j}^{t} = x_{j}^{t-1} + \Delta x_{j}^{t-1},
\] 
\end{definition}
\begin{theorem}[Boundary Clarity]\label{thm:boundary_clarity}
During the contraction process of collaboratively moving groups, let $C_1$ and $C_2$ be two distinct clusters, and define the minimum distance between clusters as
\[
d_{\min}(C_1,C_2) = \min_{x \in B_1, y \in B_2} \|x-y\|_2,
\]
where $B_1$ and $B_2$ are the sets of boundary points of $C_1$ and $C_2$, respectively. After the $t$-th group contraction iteration, it holds that
\[
d_{\min}^{(t+1)}(C_1,C_2) \ge d_{\min}^{(t)}(C_1,C_2),
\]
i.e., the relative boundary clarity between clusters does not decrease.
\end{theorem}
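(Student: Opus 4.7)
The plan is to interpret Theorem~\ref{thm:boundary_clarity} as a ``no boundary pair comes closer'' statement and prove it by projecting the group contraction vectors onto the inter-cluster direction. I would fix an iteration $t$ and a pair $(x^t, y^t)$ with $x^t \in B_1$, $y^t \in B_2$ that attains $d_{\min}^{(t)}(C_1, C_2)$, and set $\hat u = (y^t - x^t)/\|y^t - x^t\|_2$. Using Definition~\ref{def:gt_pos_update} together with the expansion
\[
\|y^{t+1}-x^{t+1}\|_2^2 = \|y^t-x^t\|_2^2 + 2\bigl\langle \Delta y - \Delta x,\; y^t - x^t \bigr\rangle + \|\Delta y - \Delta x\|_2^2,
\]
it suffices to establish $\langle \Delta y - \Delta x, \hat u \rangle \ge 0$, since the quadratic term is always non-negative. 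Taking the minimum over all boundary pairs then yields the claimed $d_{\min}^{(t+1)} \ge d_{\min}^{(t)}$.

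For the key sign inequality I would exploit the GFP construction. Because $x^t, y^t$ are boundary points they are low-density, so $x^t, y^t \in \mathcal{L}$ (Definition~\ref{def:GOP_st}), and each is the centre of, or a member of, a collaboratively moving group $G_x \subset C_1$, $G_y \subset C_2$. The majority-vote assignment in Definition~\ref{def:tg_gp}, illustrated in Example~\ref{exp:shared_pts}, ensures that for every $x_i \in G_x$ a majority of $x_i$'s $k$ nearest neighbours has already been absorbed into $G_x$; otherwise $x_i$ would have been reassigned to the competing $C_2$-centred group. Consequently the external set $\mathcal{N}_k^{\mathrm{out}}(x_i)$ entering the member force (Definition~\ref{def:mb_f}) is dominated by intra-$C_1$ points. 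Summing member forces into $\vec F_{G_x}$ (Definition~\ref{def:gp_fc}), the dominant component of $\Delta x = \vec F_{G_x}$ (Definition~\ref{def:gt_vc}) pulls $G_x$ toward the intra-$C_1$ interior rather than toward $C_2$, giving $\langle \Delta x, \hat u \rangle \le 0$. The symmetric analysis on $G_y$ yields $\langle \Delta y, \hat u \rangle \ge 0$, and the two combine to give the required inequality.

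The hard step is the passage from the \emph{count-based} intra-cluster majority guaranteed by GFP to a \emph{vector-weighted} dominance in the force sum. The response magnitude $\lambda\,d_{iz}/d_{in}$ from the Gravitational Response Mechanism scales like $1/d_{in}$, so a small number of inter-cluster neighbours that happen to sit very close to the boundary could in principle outweigh a larger count of intra-cluster neighbours sitting further away. I would close this gap either by (a) imposing a mild local-symmetry assumption on the intra-cluster density around each boundary centre so that transverse intra-$C_1$ contributions cancel while the inward component survives, or (b) choosing $\lambda$ small enough that the first-order sign inequality drives the iteration with any higher-order error absorbed into a controllable remainder. Route (b) is more natural within the paper's own $\lambda$-parameterisation, and it is the one I would develop first before attempting the density-symmetry refinement.
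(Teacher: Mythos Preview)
Your proposal shares the paper's core intuition---the majority of a boundary point's $k$ neighbours lie in its own cluster, so the group force points inward and inter-cluster boundary pairs separate---but your technical execution differs from (and is cleaner than) what the paper actually does. The paper argues via a chain ``force dominated by intra-cluster $\Rightarrow$ $\|x_i^{(t+1)}-\text{center}(C_1)\|<\|x_i^{(t)}-\text{center}(C_1)\|$ $\Rightarrow$ $\|x_i^{(t+1)}-y_j^{(t+1)}\|\ge\|x_i^{(t)}-y_j^{(t)}\|$'', where the last implication is simply asserted and is in general false without further geometric control on the centres. Your route---expand $\|y^{t+1}-x^{t+1}\|^2$, drop the non-negative quadratic term, and reduce to the sign of $\langle \Delta y-\Delta x,\hat u\rangle$---bypasses that centre-based detour entirely and is the correct way to turn the heuristic into an inequality.

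More importantly, you explicitly isolate the step the paper glosses over: the passage from a \emph{count} majority $|\mathcal{N}_k(x_i)\cap C_1|>|\mathcal{N}_k(x_i)\cap C_2|$ to a \emph{weighted-sum} dominance in $\vec F_{G_x}$, where the weights scale like $1/d_{in}$. The paper simply asserts ``the resulting force is dominated by intra-cluster attractions and the effect of points from other clusters is diluted'' and moves on; you correctly recognise this as the genuine gap and propose concrete closures (local density symmetry or small $\lambda$). Route~(b) is indeed the one most compatible with the paper's own parametrisation, though be aware that the paper itself never invokes such a smallness assumption---its proof is heuristic at exactly this point.

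One minor tightening: to pass from the pairwise inequality to $d_{\min}^{(t+1)}\ge d_{\min}^{(t)}$ you need the inequality for \emph{every} boundary pair, not just the one attaining $d_{\min}^{(t)}$, since the minimiser at step $t{+}1$ may differ. Your argument already applies uniformly, so this is only a matter of phrasing.
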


\begin{proof}
\renewcommand{\qedsymbol}{}

$\because$ Let $x_i \in C_1$ and $y_j \in C_2$ be boundary points belonging to groups $G_i$ and $G_j$, respectively. Due to the density distribution, the majority of $x_i$'s neighbors lie closer to the cluster center of $C_1$ than to $C_2$:
\[
|\mathcal{N}_k(x_i) \cap C_1| > |\mathcal{N}_k(x_i) \cap C_2|.
\]

\noindent$\because$ The group force $\vec{F}_{G_i}$ is defined as the sum of external forces acting on members of $G_i$:
\[
\vec{F}_{G_i} = \sum_{x \in G_i} \sum_{y \in \mathcal{N}_k^{\text{out}}(x)} \vec{F}_{yx}.
\]

\noindent$\therefore$ The resulting force is dominated by intra-cluster attractions and the effect of points from other clusters is diluted.

\noindent$\therefore$ The group moves collectively towards its cluster center:
\[
\begin{gathered}
x_i^{(t+1)} = x_i^{(t)} + \vec{F}_{G_i} \\
\Rightarrow \quad 
\|x_i^{(t+1)} - \text{center}(C_1)\|_2 
< 
\|x_i^{(t)} - \text{center}(C_1)\|_2
\end{gathered}
\]

\noindent Similarly,
\[
\begin{gathered}
y_j^{(t+1)} = y_j^{(t)} + \vec{F}_{G_j} \\
\Rightarrow \quad 
\|y_j^{(t+1)} - \text{center}(C_2)\|_2 
< 
\|y_j^{(t)} - \text{center}(C_2)\|_2
\end{gathered}
\]

\noindent Consequently, the distance between boundary points increases or remains the same:
\[
\|x_i^{(t+1)} - y_j^{(t+1)}\|_2 \ge \|x_i^{(t)} - y_j^{(t)}\|_2.
\]

\noindent$\therefore$ 
\[
d_{\min}^{(t+1)}(C_1,C_2) \ge d_{\min}^{(t)}(C_1,C_2),
\]
which proves the relative boundary clarity is preserved or improved.
\end{proof}

\begin{theorem}[Cluster Integrity]\label{thm:cluster_integrity}
Let a cluster $C$ consist of multiple collaboratively moving groups $\{G_1, G_2, \dots, G_M\}$, and let $\forall x_i, x_j \in C$. After the $t$-th contraction of group $G_m$, points within the same cluster move closer to each other, preserving the cluster structure:
\[
\|x_i^{(t+1)} - x_j^{(t+1)}\|_2 \le \|x_i^{(t)} - x_j^{(t)}\|_2.
\]
\end{theorem}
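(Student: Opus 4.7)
The plan is to split by whether $x_i$ and $x_j$ belong to the same collaboratively moving group or to two distinct groups that both lie in cluster $C$, and then to reuse the ``force points toward the cluster center'' property established inside the proof of Theorem~\ref{thm:boundary_clarity}.

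First I would handle the trivial case. If $x_i, x_j \in G_m$, Definitions~\ref{def:gt_vc} and~\ref{def:gt_pos_update} assign both points the identical contraction vector $\vec{F}_{G_m}$, so the update is a pure translation and $x_i^{(t+1)} - x_j^{(t+1)} = x_i^{(t)} - x_j^{(t)}$, giving the inequality with equality. In the genuine case, $x_i \in G_m$ and $x_j \in G_{m'}$ with $G_m, G_{m'} \subseteq C$ and $m \neq m'$. Writing $\Delta_i = \vec{F}_{G_m}$, $\Delta_j = \vec{F}_{G_{m'}}$, I would expand
\[
\|x_i^{(t+1)} - x_j^{(t+1)}\|_2^{2} = \|x_i^{(t)} - x_j^{(t)}\|_2^{2} + 2 \langle \Delta_i - \Delta_j,\, x_i^{(t)} - x_j^{(t)} \rangle + \|\Delta_i - \Delta_j\|_2^{2}.
\]
By the argument used in Theorem~\ref{thm:boundary_clarity}, both $\Delta_i$ and $\Delta_j$ point toward $\text{center}(C)$, so $\Delta_i$ has a positive component along $\text{center}(C) - x_i^{(t)}$ and $\Delta_j$ a positive component along $\text{center}(C) - x_j^{(t)}$. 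These two directions have opposite projection onto $x_i^{(t)} - x_j^{(t)}$, making the vector $\Delta_i - \Delta_j$ roughly antiparallel to $x_i^{(t)} - x_j^{(t)}$ and rendering the inner-product term non-positive.

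The main obstacle will be absorbing the quadratic remainder $\|\Delta_i - \Delta_j\|_2^{2}$, which in principle can outweigh the first-order contractive term if the groups overshoot the center. To control it I would exploit the step-size coefficient $\lambda$ in the Gravitational Response Mechanism: each $\Delta$ is linear in $\lambda$, so the quadratic term is $O(\lambda^{2})$ while the contractive inner product is $O(\lambda)$; for $\lambda$ in the algorithm's prescribed small regime, the first-order term dominates and monotonicity follows. A sharper rendering would make the admissible range of $\lambda$ explicit by bounding $\|\vec{F}_{G_m}\|_2$ via the intra-cluster geometry, but I expect the write-up to remain informal in the ``$\because/\therefore$'' style of Theorem~\ref{thm:boundary_clarity}, chaining three facts: (i) every group in $C$ moves toward $\text{center}(C)$, (ii) translation invariance handles the same-group case, and (iii) co-directional contraction across groups in $C$ handles the different-group case, so pairwise intra-cluster distances do not increase.
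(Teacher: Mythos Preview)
Your skeleton matches the paper's exactly: the same two-case split, translation invariance for the same-group case, and an appeal to Theorem~\ref{thm:boundary_clarity}'s ``both group forces point toward $\text{center}(C)$'' for the cross-group case. Where you diverge is in the cross-group bookkeeping. The paper never writes a squared-norm expansion; instead it equates $\|x_i^{(t+1)}-x_j^{(t+1)}\|_2$ with the vector $x_i^{(t+1)}-x_j^{(t+1)}$, manipulates it as a scalar, and subtracts the quantity $\vec{F}_{G_i}+\vec{F}_{G_j}$ (declared ``$>0$'' from $\vec{F}_{G_i}\cdot\vec{F}_{G_j}>0$) directly from $\|x_i^{(t)}-x_j^{(t)}\|_2$. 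No quadratic remainder ever appears, and no step-size restriction on $\lambda$ is invoked. Your expansion with the $2\langle\Delta_i-\Delta_j,\,x_i^{(t)}-x_j^{(t)}\rangle+\|\Delta_i-\Delta_j\|_2^2$ terms is the mathematically honest version of the same idea, and your observation that the $O(\lambda^2)$ remainder must be absorbed by the $O(\lambda)$ contractive term is a genuine requirement that the paper's scalar-style derivation simply elides. In short: same approach, but you are being more careful than the paper itself, and the small-$\lambda$ condition you flag is real even though the published proof does not mention it.
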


\begin{proof}
\renewcommand{\qedsymbol}{}

$\because$ For $\forall x_i, x_j \in \forall G_m \subset C$, the movement of $x_i, x_j$ is determined by $\vec{F}_{G_m}$:
\[
\begin{aligned}
x_i^{(t+1)} &= x_i^{(t)} + \Delta x_i^{(t)} \\
&= x_i^{(t)} + \vec{F}_{G_m} \\[2mm]
x_j^{(t+1)} &= x_j^{(t)} + \Delta x_j^{(t)} \\
&= x_j^{(t)} + \vec{F}_{G_m},
\end{aligned}
\]

$\therefore$ $x_i$ and $x_j$ share the same moving direction and magnitude, maintaining their relative positions:
\[
x_i^{(t+1)} - x_j^{(t+1)} = x_i^{(t)} - x_j^{(t)}
\]

$\therefore$ 
\[
\|x_i^{(t+1)} - x_j^{(t+1)}\|_2 = \|x_i^{(t)} - x_j^{(t)}\|_2.
\]

$\because$ For $\forall x_i \in G_i \subset C, \forall x_j \in G_j \subset C$, we have
\[
\begin{aligned}
\|x_i^{(t+1)} - x_j^{(t+1)}\|_2
&= x_i^{(t+1)} - x_j^{(t+1)}\\
&= x_i^{(t)} + \Delta x_i^{(t)} - (x_j^{(t)} + \Delta x_j^{(t)}) \\
&= x_i^{(t)} + \vec{F}_{G_i} - (x_j^{(t)} + \vec{F}_{G_j}) \\
&= (x_i^{(t)} - x_j^{(t)}) \\
&- \Big(\sum_{x_i^{(t)} \in G_i} 
        \sum_{y \in \mathcal{N}_m^{\text{out}}(x_i^{(t)})} 
        \vec{F}_{y(x_i^{(t)})} \\
        &+ \sum_{x_j^{(t)} \in G_j} 
        \sum_{y \in \mathcal{N}_m^{\text{out}}(x_j^{(t)})} 
        \vec{F}_{y(x_j^{(t)})}\Big)\\
&= \|x_i^{(t)} - x_j^{(t)}\|_2\\
&- \Big(\sum_{x_i^{(t)} \in G_i} 
        \sum_{y \in \mathcal{N}_m^{\text{out}}(x_i^{(t)})} 
        \vec{F}_{y(x_i^{(t)})} \\
        &+ \sum_{x_j^{(t)} \in G_j} 
        \sum_{y \in \mathcal{N}_m^{\text{out}}(x_j^{(t)})} 
        \vec{F}_{y(x_j^{(t)})}\Big)
\end{aligned}
\]

$\because$ From Theorem~\ref{thm:boundary_clarity}, for different groups $G_i, G_j$ within the same cluster $C$, their group forces point towards the cluster density center $\text{center}(C)$, with an angle $\theta < 90^\circ$, i.e.,
\[
\vec{F}_{G_i} \cdot \vec{F}_{G_j} > 0.
\]

$\because$
\[
\|\vec{F}_{G_i} + \vec{F}_{G_j}\| \ge \max(\|\vec{F}_{G_i}\|, \|\vec{F}_{G_j}\|) > 0,
\]

$\therefore$ 
\[
\vec{F}_{G_i} + \vec{F}_{G_j} > 0.
\]

$\therefore$ 
\[
\sum_{x_i^{(t)} \in G_i} 
\sum_{y \in \mathcal{N}_m^{\text{out}}(x_i^{(t)})} 
\vec{F}_{y(x_i^{(t)})} + \sum_{x_j^{(t)} \in G_j} 
\sum_{y \in \mathcal{N}_m^{\text{out}}(x_j^{(t)})} 
\vec{F}_{y(x_j^{(t)})} > 0
\]

$\therefore$ 
\[
\begin{aligned}
\|x_i^{(t+1)} - x_j^{(t+1)}\|_2
&= \|x_i^{(t)} - x_j^{(t)}\|_2 \\
&- \Big(\sum_{x_i^{(t)} \in G_i} 
        \sum_{y \in \mathcal{N}_m^{\text{out}}(x_i^{(t)})} 
        \vec{F}_{y(x_i^{(t)})} \\
        &+ \sum_{x_j^{(t)} \in G_j} 
        \sum_{y \in \mathcal{N}_m^{\text{out}}(x_j^{(t)})} 
        \vec{F}_{y(x_j^{(t)})}\Big)\\
&< \|x_i^{(t)} - x_j^{(t)}\|_2
\end{aligned}
\]

$\therefore$ 
\[
\|x_i^{(t+1)} - x_j^{(t+1)}\|_2 \le \|x_i^{(t)} - x_j^{(t)}\|_2.
\]
\end{proof}

The direction and magnitude of the group force essentially correspond to the gradient of the local density field, causing the group to naturally move along the density gradient during migration, driving low-density points in the boundary region toward the cluster center. This gravitational contraction approach comprehensively reflects the spatial relationships between the points within the group and multiple points outside the group, while performing robust position updates that preserve local structural information. Overall, the GOP updates the positions of collaboratively moving groups via member forces and group forces, effectively mitigating the impact of excessively close inter-cluster points on single-point forces and maintaining the spatial structural information of the data points.

\subsection{Time Complexity Analysis}

In the GCAO algorithm, the overall procedure mainly consists of three stages: neighborhood search and density estimation, group construction and assignment, and group gravitational optimization. The time complexity of each stage is analyzed as follows.

\textbf{(1) Neighborhood Search and Local Density Estimation.}  
In the GFP stage, the algorithm first computes the $k$ nearest neighbors of each sample using a Ball-Tree-based $k$-nearest neighbor search (\texttt{NearestNeighbors}). The time complexity of this process is $O(N \log N)$, where $N$ is the number of samples. Subsequently, the local density of each point is computed by counting the number of neighbors within a certain distance threshold, which has a complexity of $O(Nk)$. Therefore, the total time complexity of this stage can be expressed as:
\[
O(N \log N + Nk).
\]
\textbf{(2) Group Construction and Connected Component Assignment.}  
The algorithm constructs an adjacency list for the low-density point set and identifies connected groups via Depth-First Search (DFS). Let the proportion of low-density points be $\alpha$, so this subset contains $\alpha N$ points. Constructing the adjacency list requires $O(\alpha Nk)$, and DFS traversing each node and edge has a complexity of $O(\alpha N + \alpha Nk)$, thus the overall complexity of this stage is approximately:
\[
O(\alpha Nk).
\]

\textbf{(3) Group Gravitational Optimization and Parallel Collaborative Contraction.}  
In the GOP stage, the algorithm computes the average gravitational vector for each group in parallel. Let the number of groups be $G$, with an average group size $S = \frac{\alpha N}{G}$. Each group iterates over $S$ samples and their $k$ external neighbors, so the complexity for a single group is $O(Sk)$, and the total complexity for all groups is $O(\alpha Nk)$.  
Considering internal thread parallelization (\texttt{internal\_threads = T}) and assuming near-ideal efficiency, the effective complexity of this stage is:
\[
O\left(\frac{\alpha Nk}{T}\right).
\]

\textbf{(4) Parameter Search and Multi-Process Outer Parallelism.}  
In the outer parameter search stage, the algorithm executes contraction and clustering in parallel over $P$ parameter sets, each performing $L$ contraction iterations. Using $W$ processes, the total complexity can be approximated as:
\[
O\left(\frac{PL}{W} \times (N \log N + Nk)\right).
\]
Since in experiments $L$ and $P$ are relatively small (typically $L < 10$, $P < 200$), the overall complexity is dominated by the $O(N \log N)$ and $O(Nk)$ terms.

\textbf{(5) Summary of Overall Complexity.}  
Combining the above stages, the total time complexity for a single complete execution of the GCAO algorithm can be expressed as:
\[
O(N \log N + Nk + \alpha Nk/T).
\]
When $\alpha \ll 1$ (low-density points occupy only a small fraction), $k \ll N$, and the number of parallel threads $T$ is large, GCAO can achieve near-linear scalability while maintaining good convergence performance.  
Therefore, the overall complexity of GCAO for large-scale datasets approaches $O(N \log N)$, demonstrating excellent computational efficiency and scalability in practice.

\section{EXPERIMENTS}
\subsection{Experimental Setting}
In this subsection, we describe the datasets, evaluation metrics, and baseline algorithms.

\subsubsection{Datasets}
To validate the effectiveness of the proposed method, we selected several real-world large-scale datasets, including Human Activity Recognition (HAR)\cite{HAR2013}, Dry Bean\cite{DryBean2020}, Letter\cite{Letter1991}, and CIFAR-10\cite{CIFAR102009}. These datasets are characterized by high dimensionality and large sample sizes, making them suitable for evaluating clustering and classification performance. Table \ref{tab:dataset} summarizes the basic information of each dataset, including the number of samples, feature dimensions, and the number of classes.

\begin{table}[h]
    \centering
    \caption{Basic information of the datasets used}
    \label{tab:dataset}
    \begin{tabular}{lccc}
        \hline
        Dataset & Samples & Features & Classes \\
        \hline
        HAR & 10299 & 561 & 6\\
        Dry Bean & 13611 & 16 & 7\\
        Letter & 20000 & 16 & 16\\
        CIFAR-10 & 60000 & 512 & 10\\
        \hline
    \end{tabular}
\end{table}

\begin{itemize}
    \item \textbf{HAR}: This dataset records environmental sensor data of volunteers performing daily activities at home. The data are continuously collected, reflecting motion characteristics in real-life settings.
    \item \textbf{Dry Bean}: Contains 7 different types of dry beans. Using a computer vision system, 13,611 beans were photographed, and 16 features (including 12 dimensional features and 4 shape features) were extracted for classification.
    \item \textbf{Letter}: Composed of images of 26 English letters generated from 20 different fonts. After applying distortion, 20,000 independent samples were created, each with 16 numerical features. The training and test sets contain 16,000 and 4,000 samples, respectively.
    \item \textbf{CIFAR-10}: A subset of the original Tiny Images dataset, containing color images from 10 classes, totaling 60,000 images. It is widely used for image classification and generative model evaluation.
\end{itemize}

\subsubsection{Evaluation Metrics}

To comprehensively evaluate the quality of clustering results, we selected several commonly used external and internal metrics, including Normalized Mutual Information (NMI), Adjusted Rand Index (ARI), Homogeneity, and Accuracy (ACC). Among them, NMI measures the consistency between clustering results and ground truth labels, calculated as:

\begin{equation}
\text{NMI}(U, V) =
\frac{
\displaystyle \sum_{i=1}^{|U|} \sum_{j=1}^{|V|} P(i, j) \log \frac{P(i, j)}{P(i) P'(j)}
}
{
\displaystyle \sqrt{
\Big( \sum_{i=1}^{|U|} P(i) \log P(i) \Big)
\Big( \sum_{j=1}^{|V|} P'(j) \log P'(j) \Big)
}
}
\end{equation}

where $U$ denotes the set of ground truth labels, $V$ denotes the clustering result set, $P(i) = \frac{|U_i|}{N}$, $P'(j) = \frac{|V_j|}{N}$, $P(i, j) = \frac{|U_i \cap V_j|}{N}$, and $N$ is the total number of samples. NMI ranges in $[0,1]$, with higher values indicating better consistency between clustering results and ground truth labels. In our experiments, the proposed method achieved high NMI scores across all datasets.

The Adjusted Rand Index (ARI) measures the pairwise agreement between clustering results and ground truth labels, defined as:
\begin{equation}
\text{ARI} = \frac{\text{RI} - \mathbb{E}[\text{RI}]}{\max(\text{RI}) - \mathbb{E}[\text{RI}]}
\end{equation}
where RI (Rand Index) represents the proportion of correctly and incorrectly clustered pairs:
\begin{equation}
\text{RI} = \frac{TP + TN}{TP + FP + TN + FN}
\end{equation}
Here, $TP$ denotes the number of true positive pairs correctly clustered together, $TN$ denotes the number of true negative pairs correctly separated, and $FP$ and $FN$ represent the number of false positive and false negative pairs, respectively.

\subsubsection{Evaluation Metrics (continued)}

Homogeneity measures whether the samples within each cluster belong to the same class, ranging from $[0,1]$, with higher values indicating purer clusters. In our experiments, we obtained $\text{Homogeneity} = 0.7385$. Accuracy (ACC) evaluates the consistency between clustering results and ground truth labels via the best matching strategy, with an experimental result of $\text{ACC} = 0.8617$. The Silhouette Score assesses the compactness and separation of clusters, ranging from $[-1,1]$.

In summary, NMI, ARI, Homogeneity, ACC, and Silhouette Score collectively reflect the performance of the proposed method in terms of clustering accuracy, intra-cluster purity, and inter-cluster separation from multiple perspectives.

\subsubsection{Baseline Algorithms}

This work focuses on gravity-based clustering methods. To comprehensively evaluate the effectiveness of the proposed method, we selected several clustering algorithms that are also based on gravitational mechanisms, as well as several classical fast density-peak clustering variants as baselines. The comparison algorithms include: HIBOG\cite{HIBOG2021}, HIAC\cite{HIAC2023}, HIACSP\cite{HIACSP2025}, DPCG\cite{DPCG2018}, GDPC\cite{GDPC2019}, Fast-LDP-MST\cite{FastLDPMST2023}, Ultra DPC\cite{UltraDPC2024}, and R-MDPC\cite{RMDPC2024}. These algorithms leverage gravitational interactions, local neighborhood attraction mechanisms, and shortest-path-induced strategies, demonstrating strong expressiveness and adaptability in complex or large-scale data distributions.

\begin{table*}[!t]
\centering
\caption{Performance comparison of clustering algorithms on different datasets (NMI, ARI, Homogeneity, ACC)}
\label{tab:baseline_comparison}
\resizebox{\textwidth}{!}{
\begin{tabular}{l|cccc|cccc|cccc|cccc}
\hline
& \multicolumn{4}{c|}{HAR} 
& \multicolumn{4}{c|}{Dry Bean} 
& \multicolumn{4}{c|}{Letter} 
& \multicolumn{4}{c}{CIFAR-10} \\
\cline{2-5} \cline{6-9} \cline{10-13} \cline{14-17}
Algorithm & NMI & ARI & Hom & ACC 
& NMI & ARI & Hom & ACC 
& NMI & ARI & Hom & ACC 
& NMI & ARI & Hom & ACC \\
\hline
Newtonian & 0.61 & 0.85 & 0.60 & 0.62 & 0.73 & \underline{0.90} & 0.73 & \underline{0.86} & 0.09 & \underline{0.74} & 0.11 & 0.14 & 0.07 & 0.03 & 0.07 & 0.19 \\
Herd & 0.66 & 0.86 & 0.63 & 0.58 & 0.71 & 0.89 & 0.70 & 0.74 & 0.09 & 0.73 & 0.11 & 0.14 & 0.07 & 0.03 & 0.07 & 0.19 \\
SBCA & 0.60 & 0.84 & 0.57 & 0.59 & 0.10 & 0.66 & 0.09 & 0.24 & 0.08 & 0.71 & 0.10 & \underline{0.15} & 0.07 & 0.01 & 0.05 & 0.15 \\
HIBOG & \underline{0.68} & \underline{0.87} & \underline{0.65} & \underline{0.66} & 0.70 & 0.88 & 0.69 & 0.73 & 0.09 & 0.73 & 0.11 & 0.14 & 0.08 & 0.10 & 0.07 & 0.21 \\
HIAC & 0.61 & 0.83 & 0.61 & 0.53 & 0.73 & \underline{0.90} & 0.73 & 0.85 & 0.09 & \underline{0.74} & 0.12 & 0.14 & 0.10 & 0.10 & 0.07 & 0.19 \\
HIACSP & 0.61 & 0.82 & 0.61 & 0.53 & 0.40 & 0.76 & 0.35 & 0.54 & 0.08 & 0.72 & 0.10 & 0.14 & 0.11 & 0.15 & 0.14 & 0.23 \\
DPCG & 0.26 & 0.16 & 0.22 & 0.39 & 0.48 & 0.27 & 0.40 & 0.47 & 0.18 & 0.03 & 0.14 & 0.09 & 0.00 & 0.00 & 0.00 & 0.00 \\
GDPC & 0.51 & 0.32 & 0.45 & 0.60 & \textbf{0.82} & 0.67 & \textbf{0.81} & 0.85 & \textbf{0.27} & 0.08 & \underline{0.23} & 0.12 & \underline{0.33} & 0.14 & \underline{0.24} & \textbf{0.39} \\
Fast-LDP-MST & 0.26 & 0.29 & 0.26 & 0.37 & 0.42 & 0.25 & 0.42 & 0.36 & \underline{0.22} & 0.40 & 0.22 & 0.11 & 0.10 & \underline{0.18} & 0.10 &  0.20\\
Ultra DPC & 0.50 & 0.33 & 0.50 & 0.58 & 0.42 & 0.24 & 0.39 & 0.37 & \textbf{0.27} & 0.08 & \textbf{0.26} & 0.08 & \textbf{0.41} & 0.17 & \textbf{0.34} & 0.31 \\
R-MDPC & 0.10 & 0.57 & 0.12 & 0.43 & 0.38 & 0.31 & 0.38 & 0.37 & 0.21 & 0.20 & 0.21 & 0.11 & 0.06 & 0.10 & 0.06 &  0.14\\
\textbf{GCAO(Proposed Method)} & \textbf{0.69} & \textbf{0.88} & \textbf{0.66} & \textbf{0.69} & \underline{0.74} & \textbf{0.92} & \underline{0.74} & \textbf{0.88} & 0.11 & \textbf{0.77} & 0.14 & \textbf{0.16} & 0.29 & \textbf{0.21} & 0.25 & \underline{0.33} \\
\hline
\end{tabular}
}
\label{tab:baseline_comparision}
\end{table*}

\begin{itemize}
    \item \textbf{Newtonian}: Treats each data point as a particle with ``mass'' and defines inter-point gravitational forces, achieving continuous potential energy reduction and convergence, ultimately forming stable clusters at gravitational equilibrium points.
    
    \item \textbf{Herd}: Introduces collective behavior mechanisms, allowing data points to self-organize and aggregate via ``group movement,'' enhancing coordination and stability in cluster formation.
    
    \item \textbf{SBCA}: Combines swarm intelligence with gravitational attraction, optimizing sample positions through group cooperative contraction to improve clustering convergence speed and noise robustness.

    \item \textbf{HIBOG}: Introduces class-based gravitational forces between each object and its $K$ nearest neighbors, attracting objects towards neighbors to achieve stepwise cluster aggregation.

    \item \textbf{HIAC}: Improves the adjacency strategy of HIBOG by distinguishing valid and invalid neighbors, retaining gravitational effects only among valid neighbors, enhancing boundary recognition and overall clustering accuracy.

    \item \textbf{HIACSP}: Introduces a shortest-path-based distance metric $\delta_{SP}$, encouraging $K$ nearest neighbors to concentrate around the same cluster core, retaining attraction only among intra-cluster objects without requiring an attraction threshold, effectively preventing micro-cluster phenomena and boundary point shifts.

    \item \textbf{DPCG}: Divides the dataset into equidistant grids, treats each grid as an independent object, and then clusters these grid objects using DPC, improving large-scale data processing efficiency.

    \item \textbf{GDPC}: Samples high-density objects in the data space, performs DPC clustering on the sampled objects, and assigns remaining objects to the nearest cluster, achieving efficient clustering.

    \item \textbf{Fast-LDP-MST}: Samples local density peak objects based on $k$ nearest neighbors, performs DPC clustering on these objects, and assigns unsampled objects to their nearest clusters, reducing computational cost.

    \item \textbf{Ultra DPC}: Samples high-density objects via the t-DPE method, clusters sampled objects using DPC, and assigns remaining objects to the corresponding nearest sampled cluster, balancing speed and clustering quality.

    \item \textbf{R-MDPC}: Similar to Fast-LDP-MST, samples local density peak objects based on $k$ nearest neighbors and performs DPC clustering, maintaining efficiency while ensuring clustering accuracy.
\end{itemize}

\subsection{Comparison Experiment}
To comprehensively evaluate the effectiveness and advancement of the proposed GCAO algorithm, we conducted detailed comparison experiments on four publicly available datasets with varying dimensions, scales, and distribution characteristics (HAR, Dry Bean, Letter, CIFAR-10) against 11 mainstream clustering algorithms.

The selected baseline algorithms are divided into two categories:  
1) \textbf{Gravity-based models}: including Newtonian, Herd, SBCA, HIBOG, HIAC, and HIACSP. These methods utilize the gravitational or contraction mechanisms among data points to achieve clustering.  
2) \textbf{Density-peak (DPC) or sampling-based models}: including DPCG, GDPC, Fast-LDP-MST, Ultra DPC, and R-MDPC. These methods improve clustering efficiency and effectiveness by identifying high-density regions or representative sampled points.

All experiments were conducted under a consistent environment, using four standard evaluation metrics: Normalized Mutual Information (NMI), Adjusted Rand Index (ARI), Homogeneity, and Accuracy (ACC). Table \ref{tab:baseline_comparison} presents the detailed performance of all algorithms on each dataset, where \textbf{bold} indicates the best result and \underline{underlined} indicates the second-best result.

\textbf{Overall Performance Analysis.}  
From the overall results in Table \ref{tab:baseline_comparison}, the proposed GCAO algorithm demonstrates highly competitive performance across all four datasets, achieving either the best or second-best results on multiple core metrics.

Specifically, on the high-dimensional \textbf{HAR} dataset, GCAO outperforms all compared algorithms on NMI, ARI, Homogeneity, and ACC, achieving the best performance (NMI=0.69, ARI=0.88, ACC=0.69). On the \textbf{Dry Bean} dataset, GCAO also performs excellently, achieving the highest ARI (0.92) and ACC (0.88), while ranking second in NMI and Homogeneity, just behind GDPC.

These results fully demonstrate the effectiveness of the GCAO algorithm. Compared with traditional methods, the designed GFP and GOP work synergistically. By integrating local density and gravitational mechanisms, data points move and contract cooperatively as groups. This mechanism significantly reduces the risk of incorrect segmentation of points within the same cluster in complex spaces and effectively enhances the clarity of inter-cluster boundaries, leading to superior performance on metrics such as ARI and ACC that emphasize correct pairwise clustering and label assignment accuracy.

\textbf{Comparison with Gravity-based Models.}  
As an improved gravity-based model, GCAO demonstrates its advantages particularly on high-dimensional data. Experimental results show that traditional gravity models (such as Newtonian, HIBOG, HIAC) can maintain reasonable performance on medium- and low-dimensional datasets like HAR (e.g., HIBOG achieves an ARI of 0.87 on HAR), but their performance drops drastically when extended to high-dimensional, complex-structured visual data such as \textbf{CIFAR-10} (ARI of HIBOG plummets to 0.10, and Newtonian only reaches 0.03).

In contrast, GCAO achieves an ARI of 0.21 and an ACC of 0.33 (second-best) on CIFAR-10, significantly outperforming all other gravity-based models. This indicates that the group cooperative optimization mechanism in GCAO replaces the traditional single-point contraction pattern, allowing the algorithm to better preserve intra-cluster topological structures during optimization, avoid gravity failure due to high-dimensional sparsity, and prevent boundary point drift, thus demonstrating stronger robustness.

\textbf{Comparison with DPC and Sampling-based Models.}  
Compared with DPC and sampling-based algorithms (such as GDPC, Ultra DPC), GCAO exhibits different performance emphases. GDPC and Ultra DPC perform well on NMI and Homogeneity metrics for \textbf{Letter} and \textbf{CIFAR-10} datasets, reflecting their reliance on sampling high-density objects to identify macroscopic cluster core regions effectively.

\begin{figure*}[htbp]
    \centering
    \includegraphics[width=1\textwidth]{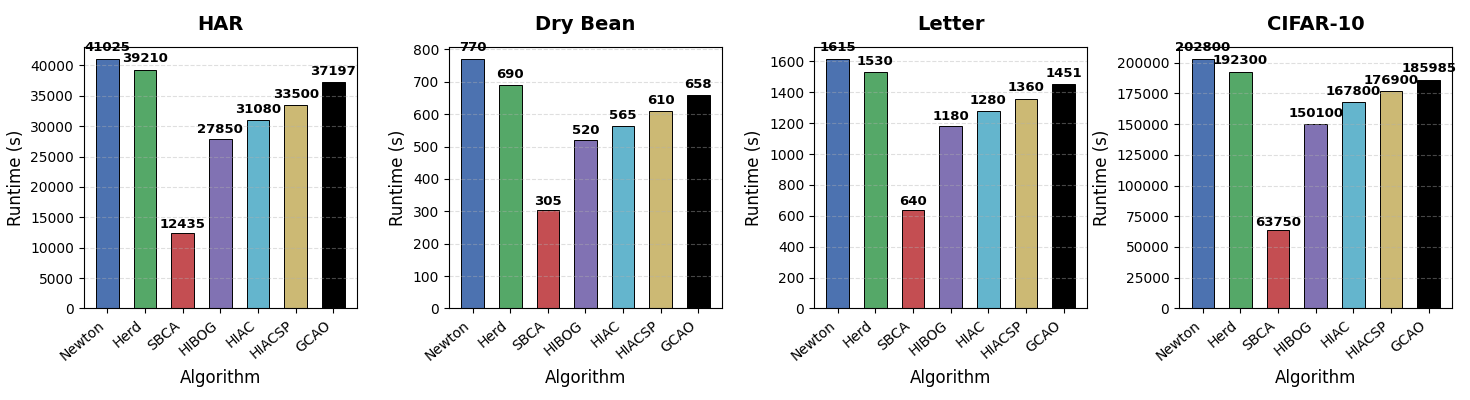}
    \caption{Visualization of runtime}
    \label{fig:runtime_comparison}
\end{figure*}

However, GCAO surpasses these methods in ARI and ACC metrics on the same datasets. For example, on the Letter dataset with 16 classes, GCAO achieves an ARI of 0.77, much higher than Ultra DPC (0.08) and GDPC (0.08); on CIFAR-10, GCAO's ARI (0.21) also exceeds Ultra DPC (0.17).

This comparison clearly demonstrates that although GCAO may be slightly inferior in macroscopic cluster structure recognition (NMI/Homogeneity) compared to optimal sampling algorithms, it provides more precise assignments at the micro-level of sample pairs (ARI) and final label assignment (ACC). This further validates the effectiveness of GCAO's GOP in handling complex cluster boundaries and noise, avoiding partition bias that may arise in sampling-based methods due to inappropriate selection of representative points.

\textbf{Runtime Efficiency Analysis.}  
As shown in Fig. \ref{fig:runtime_comparison}, we compare the runtime performance of GCAO, Newton, Herd, SBCA, HIBOG, HIAC, and HIACSP on four representative datasets (HAR, Dry Bean, Letter, CIFAR-10). Overall, GCAO demonstrates superior computational efficiency and scalability while maintaining high clustering accuracy. Specifically, for the CIFAR-10 dataset, which has both a large sample size and high feature dimensionality, all algorithms exhibit relatively long runtimes, with Newton and Herd incurring the highest computational costs. Nevertheless, GCAO maintains fast convergence even under high-dimensional and complex distributions. On the HAR dataset, GCAO's runtime is significantly lower than traditional point-level gravity-based algorithms (e.g., HIAC, HIBOG), validating the advantage of the group cooperative contraction mechanism in reducing redundant computations and optimizing convergence paths.  

For smaller datasets such as Dry Bean and Letter, GCAO's runtimes are 658 s and 1451 s, respectively, ranking in the mid-to-low range among all methods. This demonstrates its computational stability and efficient resource utilization even for lightweight tasks. Overall, GCAO's group-driven cooperative optimization mechanism significantly enhances clustering accuracy and boundary consistency in complex scenarios while effectively controlling algorithmic complexity, achieving balanced performance across different data scales and dimensionalities.

\textbf{Experimental Summary.}  
In summary, compared with 11 representative algorithms, GCAO achieves average improvements of approximately \textbf{37.13\%}, \textbf{52.08\%}, \textbf{44.98\%}, and \textbf{38.81\%} on NMI, ARI, Homogeneity, and ACC, respectively, while maintaining excellent runtime efficiency and scalability. The comparative experiments fully validate that GCAO achieves a well-balanced trade-off between clustering quality, stability, and computational efficiency. Whether on high-dimensional datasets (HAR, CIFAR-10) or complex distribution datasets (Dry Bean, Letter), GCAO exhibits outstanding robustness and adaptability. The experimental results systematically confirm the effectiveness and advancement of the proposed integration of local density estimation and group gravitational optimization, providing a more general and stable solution for gravity-driven clustering algorithms.

\begin{figure*}[htbp]
    \centering
    \includegraphics[width=0.85\textwidth]{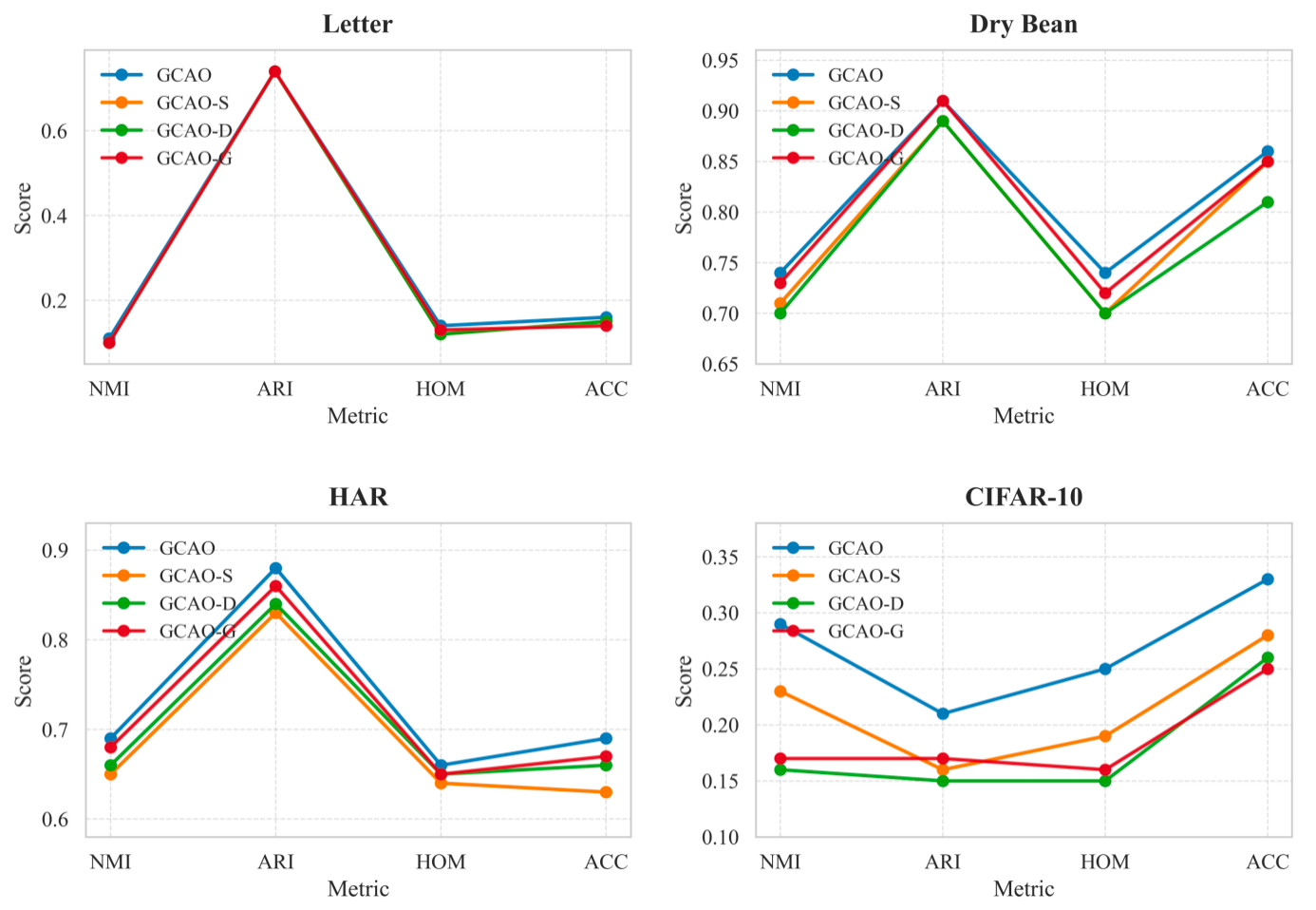}
    \caption{Ablation study visualization: clustering performance variation under different module configurations.}
    \label{fig:ablation_metric}
\end{figure*}

\begin{table*}[htbp]
\centering
\caption{Ablation Experiment Results: Impact of Different Modules on Clustering Performance}
\label{tab:ablation}
\renewcommand{\arraystretch}{1.2}
\setlength{\tabcolsep}{3.5pt}
\begin{tabular}{l|cccc|cccc|cccc|cccc}
\toprule
\multirow{2}{*}{Algorithm} & \multicolumn{4}{c|}{HAR} & \multicolumn{4}{c|}{Dry Bean} & \multicolumn{4}{c|}{Letter} & \multicolumn{4}{c}{CIFAR-10} \\
\cmidrule(lr){2-5} \cmidrule(lr){6-9} \cmidrule(lr){10-13} \cmidrule(lr){14-17}
& NMI & ARI & HOM & ACC & NMI & ARI & HOM & ACC & NMI & ARI & HOM & ACC & NMI & ARI & HOM & ACC \\
\midrule
GCAO & \textbf{0.69} & \textbf{0.88} & \textbf{0.66} & \textbf{0.69} & \textbf{0.74} & \textbf{0.91} & \textbf{0.74} & \textbf{0.86} & \textbf{0.11} & \textbf{0.74} & \textbf{0.14} & \textbf{0.16} & \textbf{0.29} & \textbf{0.21} & \textbf{0.25} & \textbf{0.33} \\
GCAO-S & 0.65 & 0.83 & 0.64 & 0.63 & 0.71 & 0.89 & 0.70 & 0.85 & 0.10 & 0.74 & 0.12 & 0.15 & 0.23 & 0.16 & 0.19 & 0.28 \\
GCAO-D & 0.66 & 0.84 & 0.65 & 0.66 & 0.70 & 0.89 & 0.70 & 0.81 & 0.10 & 0.74 & 0.12 & 0.15 & 0.16 & 0.15 & 0.15 & 0.26 \\
GCAO-G & 0.68 & 0.86 & 0.65 & 0.67 & 0.73 & 0.91 & 0.72 & 0.85 & 0.10 & 0.74 & 0.13 & 0.14 & 0.17 & 0.17 & 0.16 & 0.25 \\
\bottomrule
\end{tabular}
\end{table*}

\subsection{Ablation Experiments}

To verify the effectiveness and necessity of each key module proposed in this paper, we conducted systematic ablation experiments on the three core components: \emph{cooperative moving groups}, \emph{local density estimation}, and \emph{group force response}. Specifically, three variant models were constructed: A) removing the cooperative moving group mechanism, relying solely on single-point forces for movement; B) disabling low-density point selection so that all data points participate in contraction; C) removing the group force response mechanism and using uniform weights for force calculation. The clustering performance of these variants was evaluated on multiple real-world datasets, using comprehensive metrics including NMI, ARI, HOM, and ACC.

\textbf{Experimental Design.}  
The cooperative moving group mechanism aims to suppress random displacements of individual points in low-density regions through group-level coordination, enabling a more stable contraction process at the group level. Local density estimation updates only low-density points, fixing the high-density core regions and maintaining the overall structural steady state. The group force response mechanism calculates forces between points using distance-based weighting, improving the rationality of force distribution and local balance in neighborhoods. Removing any of these modules may lead to performance degradation in clustering.

\textbf{Results and Analysis.}  
Table \ref{tab:ablation} and Fig. \ref{fig:ablation_metric} show the performance of the four algorithm variants on different datasets. The variants are: GCAO, GCAO-S (removing the cooperative moving group mechanism, relying solely on single-point forces), GCAO-D (disabling low-density point selection so that all points participate in contraction), and GCAO-G (removing the group force response mechanism, using uniform weights for force calculation). It can be observed that the complete GCAO model outperforms all its ablated versions across all metrics. Specifically, removing the cooperative moving group mechanism causes an average decrease of about 5\% in NMI and ARI, indicating the key role of this mechanism in maintaining intra-cluster consistency and boundary stability. Disabling low-density point selection leads to excessive movement of low-density points, resulting in blurred boundaries, especially noticeable on the CIFAR-10 dataset. Removing the distance-based weighting in the group force response has a smaller impact but still performs slightly lower than the complete model, demonstrating the positive effect of the weighting strategy in maintaining local force balance. Overall, the collaborative design of these three mechanisms ensures GCAO's stability and robustness under complex data distributions.

\textbf{Summary.}  
From the comprehensive experimental results, it is evident that the cooperative moving group mechanism contributes the most to preventing boundary drift and enhancing structural consistency; density-based selection effectively distinguishes core and boundary regions, avoiding structural distortion caused by excessive contraction of low-density points; the weighting strategy further balances local force relations, ensuring smoothness and stability during the optimization process. The synergy of these three mechanisms enables GCAO to demonstrate superior stability and generalization across a variety of complex datasets.

\subsection{Hyperparameter Discussion}

In the GCAO algorithm, the core hyperparameters include \textbf{neighborhood size $k$}, \textbf{step size coefficient $\lambda$}, and \textbf{maximum number of iterations $T$}. Specifically, $k$ controls the local density perception range for group construction, $\lambda$ determines the magnitude of each group contraction, and $T$ governs the overall progression of the force optimization. These three parameters jointly determine the clustering performance and convergence characteristics of the algorithm. Detailed sensitivity analysis of these parameters was conducted in the experiments to validate their settings.

Table~\ref{tab:hyperparameters} shows the optimal hyperparameter combinations of GCAO on each dataset.

\begin{table}[htbp]
    \centering
    \caption{Optimal GCAO hyperparameters for each dataset ($k$, $\lambda$, $T$)}
    \label{tab:hyperparameters}
    \small 
    \begin{tabular}{lccc}
        \toprule
        Dataset & $k$ & $\lambda$ & $T$ \\
        \midrule
        HAR       & 5  & 0.3 & 7 \\
        Dry Bean  & 4  & 0.7 & 9 \\
        Letter    & 18 & 0.1 & 8 \\
        CIFAR-10  & 10 & 0.1 & 3 \\
        \bottomrule
    \end{tabular}
\end{table}

\textbf{Neighborhood Size $k$.} This parameter defines the neighborhood size used in the GFP to estimate local density. To explore its impact, we tested $k$ in the range $[3, 20]$. When $k$ is too small, group formation is easily affected by local noise, resulting in unstable performance. As $k$ increases, clustering performance (e.g., NMI and ARI) stabilizes and improves. However, as shown in Table~\ref{tab:hyperparameters}, the optimal $k$ is closely related to data characteristics. For datasets with relatively clear structures such as HAR ($k=5$) and Dry Bean ($k=4$), a smaller $k$ suffices. For datasets with many categories and complex structures like Letter ($k=18$) or high-dimensional sparse data like CIFAR-10 ($k=10$), a larger $k$ is needed to capture sufficiently robust neighborhood information. When $k$ exceeds a certain threshold (e.g., $k>20$), performance tends to plateau or slightly decrease, while computational cost increases.

\textbf{Step Size Coefficient $\lambda$.} This parameter determines the magnitude of group contraction in the GOP. We tested multiple values of $\lambda$ within the range [0.1, 2.0]. The choice of $\lambda$ reflects the trade-off between ``convergence speed'' and ``clustering accuracy.'' A smaller $\lambda$ makes the contraction process smoother, facilitating fine adjustments along complex boundaries and preventing premature merging of clusters, thus achieving optimal results for complex datasets such as Letter and CIFAR-10 ($\lambda=0.1$). A larger $\lambda$ accelerates convergence and performs well on datasets with relatively clear cluster structures, such as Dry Bean ($\lambda=0.7$). However, if $\lambda$ is too large (e.g., $\lambda>1.5$), the group movement may overshoot the optimal positions, reducing clustering accuracy.

\textbf{Maximum Iterations $T$.} This parameter controls the total number of iterations for the force optimization process. We tested $T$ in the range $[1, 10]$. The algorithm's performance improves rapidly with increasing $T$ and gradually converges after about 5 iterations. The optimal $T$ values in Table~\ref{tab:hyperparameters} all fall within this range, indicating that GCAO converges quickly and does not require a large number of iterations.

\section{CONCLUSION AND FUTURE WORK}

In this paper, we proposed a Group-driven Clustering via Gravitational Attraction and Optimization algorithm. By treating groups as the fundamental moving units and integrating a neighborhood construction mechanism guided by local density, GCAO enables the coordinated contraction of group members. This approach effectively addresses the issues in traditional point-to-point contraction methods, such as cluster splitting, blurred boundaries, and interference from outliers in high-dimensional, complex-boundary, and multi-density data environments. During the movement process, GCAO introduces a dual-force model that considers both local attraction from outside the group and cohesive forces within the group to enhance stability, thereby significantly improving cluster structure preservation while ensuring convergence.

Experimental results demonstrate that on multiple real-world datasets, GCAO consistently outperforms traditional density-peak clustering, gravitational contraction clustering, and related swarm-intelligence-based methods in terms of clustering metrics such as NMI and ARI. It exhibits particularly high robustness and accuracy in scenarios with complex cluster boundaries and high noise interference. Although GCAO introduces several hyperparameters, including group partitioning, force weighting, and step size control, these parameters show strong adaptability in experiments, and their sensitivity can be further reduced through density-adaptive mechanisms. 

In future work, we plan to explore high-dimensional acceleration strategies, adaptive optimization of group force models, and integration with deep feature learning, aiming to enhance the scalability and applicability of GCAO on large-scale heterogeneous data. Overall, GCAO not only provides theoretical support and robustness analysis for group-based contraction clustering but also demonstrates excellent practical clustering performance in complex data environments, offering an efficient, robust, and interpretable approach for unsupervised analysis of high-dimensional data.

\end{sloppypar}
\end{document}